\theoremstyle{plain}
\newtheorem{theorem}{Theorem}[section]
\newtheorem{lemma}[theorem]{Lemma}
\theoremstyle{definition}
\newtheorem{definition}[theorem]{Definition}
\theoremstyle{remark}
\newcommand{\eg}{e.\,g.}
\newcommand{\ie}{i.\,e.}
\newcommand{\method}{CARE}
\begin{document}

\title{Cluster Aware Graph Anomaly Detection}





\author{Lecheng Zheng}
\email{lecheng4@illinois.edu}
\affiliation{%
  \institution{University of Illinois Urbana-Champaign}
  \country{United States}
}

\author{John Birge}
\email{john.birge@chicagobooth.edu}
\affiliation{%
  \institution{University of Chicago}
  \country{United States}
}

\author{Haiyue Wu}
\email{haiyuew2@illinois.edu}
\affiliation{%
  \institution{University of Illinois Urbana-Champaign}
  \country{United States}
}

\author{Yifang Zhang}
\email{zhang303@illinois.edu}
\affiliation{%
 \institution{University of Illinois Urbana-Champaign}
 \country{United States}
}

\author{Jingrui He}
\email{jingrui@illinois.edu}
\affiliation{%
  \institution{University of Illinois Urbana-Champaign}
  \country{United States}
}


\begin{abstract}
Graph anomaly detection has gained significant attention across various domains, particularly in critical applications like fraud detection in e-commerce platforms and insider threat detection in cybersecurity. Usually, these data are composed of multiple types (e.g.,  user information and transaction records for financial data), thus exhibiting view heterogeneity. However, in the era of big data, the heterogeneity of views and the lack of label information pose substantial challenges to traditional approaches. Existing unsupervised graph anomaly detection methods often struggle with high-dimensionality issues, rely on strong assumptions about graph structures or fail to handle complex multi-view graphs. To address these challenges, we propose a cluster aware multi-view graph anomaly detection method, called \method. Our approach captures both local and global node affinities by augmenting the graph's adjacency matrix with the pseudo-label (i.e., soft membership assignments) without any strong assumption about the graph. To mitigate potential biases from the pseudo-label, we introduce a similarity-guided loss. Theoretically, we show that the proposed similarity-guided loss is a variant of contrastive learning loss, and we present how this loss alleviates the bias introduced by pseudo-label with the connection to graph spectral clustering. Experimental results on several datasets demonstrate the effectiveness and efficiency of our proposed framework. Specifically, \method\ outperforms the second-best competitors by more than 39\% on the Amazon dataset with respect to AUPRC and 18.7\% on the YelpChi dataset with respect to AUROC. The code of our method is available at the GitHub link: \url{https://github.com/zhenglecheng/CARE-demo}.

\end{abstract}

\begin{CCSXML}
<ccs2012>
   <concept>
       <concept_id>10002978.10002997</concept_id>
       <concept_desc>Security and privacy~Intrusion/anomaly detection and malware mitigation</concept_desc>
       <concept_significance>500</concept_significance>
       </concept>
  <concept>
       <concept_id>10010147.10010257.10010258.10010260</concept_id>
       <concept_desc>Computing methodologies~Unsupervised learning</concept_desc>
       <concept_significance>500</concept_significance>
       </concept>
   <concept>
       <concept_id>10002950.10003624.10003633</concept_id>
       <concept_desc>Mathematics of computing~Graph theory</concept_desc>
       <concept_significance>500</concept_significance>
       </concept>
 </ccs2012>
\end{CCSXML}

\ccsdesc[500]{Security and privacy~Intrusion/anomaly detection and malware mitigation}
\ccsdesc[500]{Computing methodologies~Unsupervised learning}
\ccsdesc[500]{Mathematics of computing~Graph theory}

\keywords{Graph Anomaly Detection; Contrastive Learning; Multi-view Learning}

\maketitle

\section{Introduction}
\label{sec:intro}
Graph-based anomaly detection has been an important research area across diverse domains for decades, particularly within high-impact applications, such as fraud detection within e-commerce platforms~\cite{DBLP:conf/kdd/RamakrishnanSLS19, abdallah2016fraud, wang2019semi} and insider threat detection in the cybersecurity domain~\cite{jiang2019anomaly, brdiczka2012proactive, eberle2010insider}. For instance, in the realm of e-commerce, leveraging a graph-based anomaly detection algorithm proves invaluable for identifying fraudulent sellers by analyzing the properties (i.e., attributes) and connections (i.e., structure) among users~\cite{DBLP:conf/cikm/JinLZCLP21}. Similarly, in the context of insider threat detection, constructing a graph based on users' activities allows investigators to discern anomalous users in the organization by exploring the substructure of the graph~\cite{DBLP:conf/sp/GlasserL13}.

In the era of big data, the collected data often exhibit heterogeneous views (\eg, various data)~\cite{zheng2024online, 
 DBLP:conf/www/ZhengCHC24, DBLP:conf/kdd/QiBH23, DBLP:conf/kdd/QiBH22} and lack labeled data. For example, in the e-commerce platform, multiple types of data can be collected for heterogeneous graph construction, including the user's shopping history, search trends, and product ratings~\cite{jilkova2021digital}; in credit card fraud detection, the data are composed of both cardholder information and transaction records (\eg, online purchase records)~\cite{zhong2020financial}. However, obtaining labels is often impractical due to the expense associated with labeling services and the demand for domain-specific expertise in discerning malicious patterns~\cite{DBLP:conf/kdd/ZhengXZH22}. This highlights the need for innovative approaches that can deal with the intricacies of heterogeneous and unlabeled datasets. 

Until then, many unsupervised anomaly detection methods~\cite{guan2009fast, tong2011non, DBLP:journals/tkde/ChenZZDSZXKT23} have been proposed and they can be categorized into two branches, including feature reconstruction methods and self-supervised learning methods. The feature reconstruction approaches focus on minimizing the reconstruction error of node attributes or structures~\cite{DBLP:conf/sdm/DingLBL19, ahmed2021neighborhood, DBLP:conf/aaai/Cheng0L21}. However, these feature reconstruction based methods tend to suffer from the curse of high dimensionality~\cite{donoho2000high}, especially for citation network where the word occurrence is extracted as the node attributes. Another direction is the self-supervised learning methods~\cite{DBLP:conf/nips/QiaoP23, zheng2021generative, ahmed2021neighborhood, DBLP:conf/pkdd/HuangPMP22, DBLP:journals/tnn/LiuLPGZK22, DBLP:conf/pakdd/XuHZDL22}, which aim to design a proxy task related to anomaly detection, whereas these methods tend to have strong assumptions regarding the graph structure, thus only performing well on some graphs with certain structures. For instance, TAM~\cite{DBLP:conf/nips/QiaoP23} holds the \textit{one-class homophily assumption} that normal nodes tend to have much stronger affinity with each other than with the abnormal nodes and the authors propose to maximize the local node similarity. However, TAM fails to consider a situation where the normal nodes and their normal neighbors might
come from different classes, indicating the distinct features for these
connected normal nodes. A general limitation for both branches is that many of these methods~\cite{zheng2021generative, DBLP:conf/sdm/DingLBL19, DBLP:conf/kdd/HooiSBSSF16} are designed for single-view graphs, suffering from the presence of multiple views. 

To address these limitations, we propose a unified \underline{C}luster \underline{A}wa\underline{RE} graph anomaly detection method to identify anomalous nodes in multi-view graphs, named \method. In this work, we propose to capture both local and global node affinity and design an anomaly score function to assigning higher anomaly scores to nodes that are less similar to their neighbors based on both node attributes and structural similarity. Since the raw adjacency matrix in the graph only contains the local node affinity information, we measure the global node affinity by leveraging the pseudo-label (i.e., soft membership assignments) to augment the original adjacency matrix without any strong assumption about the graph. To reduce the potential bias introduced by the pseudo-label during the optimization, we propose a similarity-guided loss that utilizes the soft assignment to build a similarity map to help the model learn robust representations. We analyze that the proposed similarity-guided loss is a variant of the contrastive loss, and we present how the proposed regularization mitigates the potential bias by connecting it with graph spectral clustering. Our main contributions are summarized below:
\begin{itemize}
    \item A novel self-supervised framework for detecting anomalies in multi-view graphs.
    \item A novel similarity-guided contrastive loss for learning graph contextual information and its theoretical analysis showing the connection to graph spectral clustering.
    \item Theoretical analysis showing the negative impact of other types of contrastive loss.
    \item Experimental results on six datasets demonstrating the effectiveness and efficiency of the proposed framework.
\end{itemize}
The rest of this paper is organized as follows. After briefly reviewing the related work in Section 2, we then introduce a multi-view graph anomaly detection framework in Section 3. Next, we conduct the systematic evaluation of the proposed framework on several datasets in Section 4 before we conclude the paper in Section 5.

\section{Related Work}
In this section, we briefly review the related work on clustering and anomaly detection. 

\textbf{Clustering.}
In the past decades, clustering methods gradually evolved from traditional shallow methods, (\eg, Non-Negative Matrix Factorization (NMF) methods~\cite{shen2010non, DBLP:journals/tkde/AllabLN17, DBLP:conf/aaai/ZhangZLY15} and spectral clustering methods~\cite{DBLP:conf/nips/NgJW01, DBLP:conf/iccv/LiL09, DBLP:conf/nips/BachJ03}), to deep learning-based methods~\cite{DBLP:conf/aaai/LiuTZLSYZ22, DBLP:conf/iconip/GuoLZY17, DBLP:conf/www/Bo0SZL020, DBLP:conf/sdm/ZhengZH23}. 
For example, the authors of ~\cite{shen2010non} propose an NMF-based clustering method that models the intrinsic geometrical structure of the data by assuming that several neighboring points should be close in the low dimensional subspace. 
The work in~\cite{DBLP:journals/tkde/AllabLN17} presents a semi-NMF clustering method by taking advantage of the mutual reinforcement between data reduction and clustering tasks. 
Different from these methods, this paper first follows the idea of the graph pooling method~\cite{DBLP:conf/nips/YingY0RHL18} to get the soft assignment, aiming to capture the global node affinity information and regularize the learned representations by the similarity-guided contrastive loss.

\textbf{Anomaly Detection.}
Anomaly detection has been studied for decades~\cite{grubbs1969procedures, ma2021comprehensive, pourhabibi2020fraud, DBLP:conf/aaai/LiZZH21, DBLP:conf/aaai/DuanW0ZHJ0D23}. The increasing demand in many domains, such as financial fraud detection, anomaly detection in cybersecurity, etc., has attracted many researchers' attention, and a variety of outstanding algorithms have been proposed
~\cite{DBLP:conf/cvpr/PangYSH020, DBLP:conf/accv/AkcayAB18, DBLP:conf/ijcai/ChenTSCZ16, DBLP:journals/tii/ZhouHLMJ21, DBLP:conf/cvpr/LiSYP21, DBLP:conf/cvpr/ParkNH20}. 
To name a few, AnoGAN~\cite{DBLP:conf/accv/AkcayAB18} is a generative adversarial network (GAN) based anomaly detection method, which utilizes the generator and the discriminator to capture the normal patterns while the anomalies are detected based on the residual score and the discrimination score. HCM-A~\cite{DBLP:conf/pkdd/HuangPMP22} designs a self-supervised learning loss by forcing the prediction of the shortest path length between pairs of nodes. ComGA~\cite{DBLP:conf/wsdm/Luo0BYZWX22} proposes a community-aware attributed graph anomaly detection method to detect community structure of the graph. 
TAM~\cite{DBLP:conf/nips/QiaoP23} proposes a scoring measure by assigning large score to the nodes that are less affiliated with their neighbors and introduce truncated affinity maximization to reduce the bias during the optimization. In contrast, this paper proposes to capture both local and global node affinity information, and we propose similarity-guided contrastive learning loss to learn robust representations and to mitigate potential bias.

\begin{table}[htp]
\caption{Definition of Symbols}
\vspace{-2mm}
\centering
\scalebox{0.9}{
\begin{tabular}{|*{2}{c|}}
\hline \textbf{Symbols} & Definition\\
\hline $\bm{V}$         & The set of vertices \\   
\hline $\bm{E}^a$       & The set of edges for the $a$-th view \\   
\hline $\bm{A}^a$       & The $a$-th view adjacency matrix\\   
\hline $\bm{X}^a$       & The $a$-th view node attribute matrix \\   
\hline $v (n)$          & The number of views (nodes)\\    
\hline $\bm{D}^a$       & The degree matrix for the $a$-th view\\   
\hline $\bm{H}^{a}$     & The the $a$-th view node representations \\  
\hline $\bm{\hat{A}}$       & The adjacency matrix augmented by cluster similarity\\   
\hline $\bm{M}^a$         & The soft membership matrix for the $a$-th view\\
\hline $\bm{\bar{H}}$   & The average node representations \\  
\hline $\bm{\bar{M}}$         & The average soft membership matrix\\
\hline $\bm{\tilde{A}}$         & The normalized augmented adjacency matrix\\
\hline
\end{tabular}}
\label{table_symbols}
\vspace{-3mm}
\end{table}

\section{Proposed \method\ Framework}
In this section, we present our proposed framework, \method, for multi-view graph anomaly detection. We begin by defining the notation and then introduce cluster-aware node affinity learning alongside the similarity-guided contrastive learning loss. Next, we outline the overall objective function and the inference process for detecting anomalies. Finally, we analyze the limitations of using weakly supervised contrastive loss in our method.

\subsection{Notation}
Throughout this paper, we use regular letters to denote scalars (\eg, $\alpha$), boldface lowercase letters to denote vectors (\eg, $\bm{x}$), and boldface uppercase letters for matrices (\eg, $\bm{A}$). Given an undirected graph $\mathcal{G} = (\bm{V},\bm{E}^1,...,\bm{E}^v, \bm{X}^1,...,\bm{X}^v)$, our objective is to identify anomalous nodes in the graph, where $v$ represents the number of views, $\bm{V}$ consists of $n$ vertices, $\bm{E}^v$ consists of $m^v$ edges, $\bm{X}^v\in \mathbb{R}^{n\times d_v}$ denotes the feature matrix of the $v$-th view and $d_v$ is the feature dimension. For clarity, we denote $u_i$ as node $i$, $\bm{x}_i^v\in \mathbb{R}^{d_v}$ as the node attributes of $u_i$ for the $v$-th view, $\bm{h}_i\in \mathbb{R}^{d}$ as the embedding of node $u_i$ by any type of GNNs and $d$ is the feature dimensionality of the hidden representation. $\bm{H}^v\in \mathbb{R}^{n \times d^v}$ is the node embedding matrix. We let $\bm{A}^v \in \mathbb{R}^{n\times n}$ denote the adjacency matrix of the $v$-th view where $\bm{A}^v_{ij}=1$ iff node $u_i$ and node $u_j$ are connected, $\bm{D}^v \in \mathbb{R}^{n\times n}$ denotes the diagonal matrix of vertex degrees for the $v$-th view, and $\bm{I} \in \mathbb{R}^{n\times n}$ denotes the identity matrix. The symbols are summarized in Table~\ref{table_symbols}.

\begin{figure}[htp]
\centering
\includegraphics[width=0.85\linewidth]{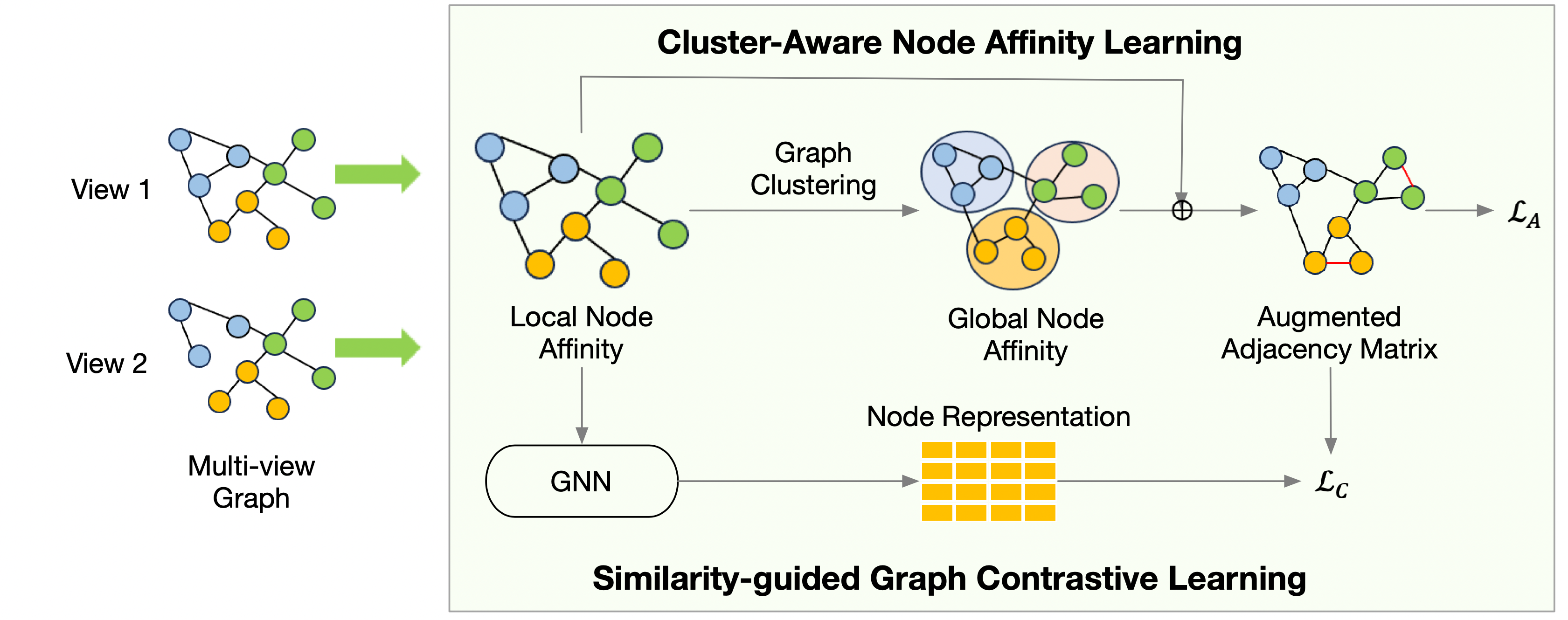}
\caption{The overview of \method. It first extracts the global node affinity based on the soft assignment by graph clustering method, and then combines the global node affinity and local node affinity together. Similarity-guided graph contrastive loss is then introduced to mitigate the potential bias.}
\label{fig:sigil_framework}
\vspace{-3mm}
\end{figure}

\subsection{Cluster-Aware Node Affinity Learning}
\label{graph_pool}
Many self-supervised learning methods~\cite{DBLP:conf/nips/QiaoP23, zheng2021generative, ahmed2021neighborhood, DBLP:conf/pkdd/HuangPMP22, DBLP:journals/tnn/LiuLPGZK22, DBLP:conf/pakdd/XuHZDL22} aim to design a proxy task relevant to the anomaly detection. One branch~\cite{DBLP:conf/nips/QiaoP23, DBLP:journals/corr/abs-2405-17525} is to maximize the local node similarity for a multi-view graph as follows:
\begin{align}
    \label{TAM_loss}
    \nonumber\mathcal{L}_{1} &= \sum_{a=1}^v\sum_{u_i} \frac{1}{|\mathcal{N}^a(i)|} \sum_{{u_j}\in \mathcal{N}^a(i)} \text{sim}(\bm{h}_i^a, \bm{h}_j^a) \\
    & =  \sum_{a=1}^v\sum_{u_i}\sum_{u_j} \frac{1}{\bm{D}_{i}^a} \bm{A}_{ij}^a \cdot \text{sim}(\bm{h}_i^a, \bm{h}_j^a)
\end{align}
where $\text{sim}(\bm{h}_i^a, \bm{h}_j^a)=\frac{\bm{h}_i^a(\bm{h}_j^a)^T}{|\bm{h}_i^a||\bm{h}_j^a|}$ measures the similarity of node embedding between the node $u_i$ and its neighbor $u_j$ based on the $a$-th view of the graph, $\mathcal{N}^a(i)$ denotes the neighbors of the node $u_i$ and $\bm{D}_{i}^a$ is the degree of the node $u_i$ for the $a$-th view. However, these methods typically rely on \textit{one-class homophily assumption}, which posits that normal nodes tend to exhibit strong affinities with each other, while the affinities among abnormal nodes are significantly weaker. This assumption is overly restrictive, as it focuses exclusively on extracting local node affinities while neglecting global node affinities.


A naive solution to relax this constraint is to incorporate high-order information by extending the local $1$-hop neighbors to $k$-hop neighbors. However, this approach presents two main issues. First, including high-order neighbors inevitably introduces more abnormal nodes during the node affinity maximization process, resulting in a sub-optimal solution. Second, it overlooks a crucial scenario where normal anchor nodes and their neighbors may belong to different classes, indicating that these connected normal nodes may have distinct features. Since both normal neighbors from different classes and abnormal neighbors possess features that differ from those of normal anchor nodes, incorporating these distinct features in node affinity learning decreases the likelihood of detecting abnormal nodes. Furthermore, adding high-order neighbors in node affinity learning also increases the probability of including neighbors from different classes, exacerbating the problem. 
To address this issue, we propose incorporating label information into the local node affinity maximization as follows:
\begin{equation}
\label{TAM_label_loss}
    \mathcal{L}_{2} = \sum_{a=1}^v\sum_{u_i}\sum_{u_j} \frac{1}{\sum_{u_j}(\bm{A}_{ij}^a + \bm{S}_{ij}^a)} (\bm{A}_{ij}^a + \bm{S}_{ij}^a) \cdot \text{sim}(\bm{h}_i^a, \bm{h}_j^a)
\end{equation}
where $\bm{S}_{ij}=1$ if node $u_i$ and node $u_j$ belong to the same class and $\bm{S}_{ij}=0$ otherwise. Compared to $\mathcal{L}_{1}$, $\mathcal{L}_{2}$ further encodes the label information in the node affinity learning. However, label information is often unavailable due to the high costs of labeling services and the rapid growth of new data. To address this issue, we propose replacing the unavailable label information with pseudo-labels derived from a graph clustering method. Following the idea of differential graph pooling~\cite{DBLP:conf/nips/YingY0RHL18}, we employ a one-layer Graph Convolutional Network (GCN)~\cite{DBLP:conf/iclr/KipfW17} with a softmax activation function to model soft membership assignments as follows:
\begin{align}
\label{graph_pooling}
\nonumber \bm{M}^a &= \text{GCN}^a(\bm{A}^a, \bm{X}^a, \bm{W}^a) \\
\bm{\bar{M}} &=\frac{1}{v}\sum_{a=1}^v \bm{M}^a
\end{align}
where $\bm{W}^a \in \mathbb{R}^{d_a \times c}$ is the weight matrix of GCN for the $a$-th view and $c$ is the number of clusters and we aggregate soft membership assignments from all views to obtain $\bm{\bar{M}}$. We then augment the adjacency matrix by incorporating the clustering results as follows:
\begin{align}
\bm{\hat{A}} &= (1-\alpha)\bm{\bar{A}} + \alpha \bm{\bar{M}}\bm{\bar{M}}^T
\end{align}
where $\bm{\bar{A}}=\frac{1}{v}\sum_{a=1}^v \bm{A}^a$ and $\alpha\in [0, 1]$ is a hyper-parameter balancing the importance between raw adjacency matrix and the similarity of the soft membership assignments. Our goal is to maximize the cluster-aware node affinity as follows:
\begin{equation}
\label{affinity_loss}
    \mathcal{L}_{A}(u_i) = \sum_{u_j} \frac{1}{\bm{\hat{D}}_i} \bm{\hat{A}}_{ij} \cdot \text{sim}(\bm{\bar{h}}_i, \bm{\bar{h}}_j) - \sum_{a=1}^v r^a||\bm{h}_i^a - \bm{\bar{h}}_i||_2
\end{equation}
where $r^a$ is a learnable weight measuring the importance of the $a$-th view with the condition $\sum_{a=1}^v r^a=1$, $\bm{\hat{D}}_i = \sum_{u_j} \bm{\hat{A}}_{ij}$ measures the degree of node $u_i$ in the new adjacency matrix and $\bm{\bar{h_i}} = \frac{1}{v}\sum_{a=1}^{v}\bm{h}_i^a$ is the average node representation. The second term enforces consistency in node embeddings across all views. However, optimization using Eq.~\ref{affinity_loss} may be significantly biased by low-quality soft membership assignments at the early stages. To address this issue, we introduce the similarity-guided graph contrastive regularization.

\subsection{Similarity-guided Graph Contrastive Regularization}
To mitigate potential bias introduced by low-quality soft membership assignments, we propose a similarity-guided graph contrastive loss that minimizes the difference between the similarity of the soft assignment and the similarity of node representations for any pair of nodes. This is formulated as follows:
\begin{equation}
\label{L_2}
\begin{split}
        \mathcal{L}_2 &=\min_{\bm{\bar{H}}} ||\bm{\bar{M}}\bm{\bar{M}}^T-\bm{\bar{H}}\bm{\bar{H}}^T||_F^2 \\
\end{split}
\end{equation}
where $\bm{\bar{M}}$ is the soft assignment matrix computed in Eq.~\ref{graph_pooling}. 
$\mathcal{L}_2$ aims to learn the hidden representations such that the representations of node $u_i$ and node $u_j$ are expected to be close in the latent space if their soft membership assignments are similar. Following the idea of many existing methods restoring graph structure along with the node attributes of the graph~\cite{DBLP:conf/nips/QiaoP23, zheng2021generative, ahmed2021neighborhood, DBLP:conf/pkdd/HuangPMP22}, we propose to take the graph topological structure into consideration, reformulating the similarity-guided graph contrastive loss as:
\begin{equation}
\label{L_c}
\begin{split}
        \mathcal{L}_C &=\min_{\bar{H}} ||\bm{\tilde{A}}-\bm{\bar{H}}\bm{\bar{H}}^T||_F^2 \\
\end{split}
\end{equation}
where $\bm{\tilde{A}}=\bm{\tilde{D}}^{-1/2}\bm{\hat{A}}\bm{\tilde{D}}^{-1/2}$ is the normalized augmented adjacency matrix, and $\bm{\tilde{D}}\in \mathbb{R}^{n\times n}$ is the diagonal matrix with $\bm{\tilde{D}}_{ii} =\sum_{u_j} \bm{\hat{A}}_{ij}$. \\

\noindent\textbf{Theoretical Justification.}
We aim to provide a theoretical analysis of how the proposed similarity-guided graph contrastive regularization mitigates potential bias by connecting it with graph spectral clustering. Before delving into the direct analysis of bias mitigation, we first demonstrate that $\mathcal{L}_C$ functions as a contrastive learning loss.
  
\begin{lemma}
\label{lemma_1}
(Similarity-guided Graph Contrastive Loss) Let $\bm{\bar{M}}$ be the output of a one-layer graph neural network defined in Eq.~\ref{graph_pooling}. Then, we have 
\begin{equation}
    \mathcal{L}_C = \mathcal{L}_f + C
\end{equation}
where $\mathcal{L}_f = -\sum_{i=1}^n\sum_{j=1}^n \log\frac{\exp(2\bm{\tilde{A}}_{ij}\bm{\bar{h}}_i\bm{\bar{h}}_j^T)}{\Pi_{k=1}^n \exp((\bm{\bar{h}}_i\bm{\bar{h}}_k^T)^2)^{1/n}}$ is a graph contrastive loss and $C$ is a constant.
\end{lemma}
\noindent
\textbf{See proof in Appendix~\ref{appendix_2}.}\\
\textbf{Remark:} Compared to traditional contrastive learning losses, the denominator of $\mathcal{L}_f$ in Lemma~\ref{lemma_1} is the product of the exponential similarity between two node embeddings rather than a summation. The weakly supervised contrastive methods~\cite{DBLP:conf/iccv/Zheng0Y0Z0021, DBLP:conf/iclr/TsaiLLLSM22} impose a constraint that forces a given pair of nodes to form a positive/negative pair based on their likelihood of being assigned to the same cluster (further discussion can be found in Subsection~\ref{Contrastive_regularization}). Unlike these "discrete" formulations regarding positive and negative pairs, our approach is in a continuous form. In $\mathcal{L}_f$, $\bm{S}_{ij}=\bm{\bar{M}}_i\bm{\bar{M}}_j^T$ can be interpreted as the similarity measurement of a node pair $(u_i, u_j)$ in terms of soft assignment, guiding the similarity of the representations of two nodes in the latent space. If we ignore the influence of the adjacency matrix in 
$\bm{\tilde{A}}=\bm{\tilde{D}}^{-1/2}\bm{\hat{A}}\bm{\tilde{D}}^{-1/2}=\bm{\tilde{D}}^{-1/2}(\alpha \bm{\bar{M}}\bm{\bar{M}}^T + (1-\alpha)\frac{1}{v}\sum_{a=1}^v\bm{A})\bm{\tilde{D}}^{-1/2}$
by setting $\alpha$ to be 1 (\ie, $\bm{\tilde{A}}=\bm{\tilde{D}}^{-1/2}\bm{\bar{M}}\bm{\bar{M}}^T\bm{\tilde{D}}^{-1/2}$), it would be interesting to see that when two nodes are sampled from two distant clusters, $\bm{\tilde{A}}_{ij}\approx 0$ and $\log\frac{\exp(2\bm{S}_{ij}\bm{\bar{h}}_i\bm{\bar{h}}_j^T)}{\Pi_{k=1}^n\exp((\bm{\bar{h}}_i\bm{\bar{h}}_k^T)^2)^{\frac{1}{n}}}=0$ for this pair of nodes.
Notice that for the node pair ($u_i$, $u_j$) with high confidence being assigned to the same cluster, the weight $\bm{S}_{ij}$ is larger than the weight $\bm{S}_{ik}$ for the uncertain node pair ($u_i$, $u_k$), where the node $u_k$ has low confidence to be assigned to the same cluster as $u_i$. By reducing the weight for these unreliable positive pairs, the negative impact of uncertain pseudo-labels can be alleviated. 

Next, we aim to clarify our proposed contrastive loss by demonstrating the connection between $\mathcal{L}_C$  and graph spectral clustering.
\begin{lemma}
\label{lemma_2} (Graph Contrastive Spectral Clustering) Let $\bm{\bar{M}}$ be the output of a one-layer graph neural network defined in Eq.~\ref{graph_pooling} and $\bm{\bar{h}}_i$ and $\bm{\bar{h}}_j$ be unit vectors. Then, minimizing $\mathcal{L}_C$ is equivalent to minimizing the following loss function:
\begin{equation}
    \min \mathcal{L}_C = \min [2Tr(\bm{\bar{H}}^T \bm{L} \bm{\bar{H}}) + R(\bm{\bar{H}})] \label{lemma2_1}
\end{equation}
where $\bm{L}=\bm{I}-\bm{\tilde{A}}$ can be considered as the normalized graph Laplacian, $\bm{I}$ is the identity matrix and $R(\bm{\bar{H}})=||\bm{\bar{H}}\bm{\bar{H}}^T||_F^2$ is the regularization term.
\end{lemma}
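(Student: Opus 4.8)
The plan is to obtain the identity in Eq.~\ref{lemma2_1} by a direct expansion of the Frobenius norm in $\mathcal{L}_c = \|\bm{O}-\bm{\hat{Z}}\bm{\hat{Z}}^T/\tau\|_F^2$, discarding every term that does not involve $\bm{\hat{Z}}$ and rescaling the objective by the positive constant $\tau$ (which leaves the set of minimizers unchanged). Concretely, using $\|A-B\|_F^2=\|A\|_F^2-2Tr(A^TB)+\|B\|_F^2$ together with the symmetry of $\bm{O}=\bm{D}^{-1/2}(\alpha\bm{M}\bm{M}^T+(1-\alpha)\tfrac{1}{v}\sum_a\bm{A}^a)\bm{D}^{-1/2}$ (which is fixed here, since $\bm{M}=\bm{S}^1$ is a fixed pooling matrix and hence $\bm{\hat{Z}}$-independent), I would write
\[
\mathcal{L}_c=\|\bm{O}\|_F^2-\tfrac{2}{\tau}Tr(\bm{O}\,\bm{\hat{Z}}\bm{\hat{Z}}^T)+\tfrac{1}{\tau^2}\|\bm{\hat{Z}}\bm{\hat{Z}}^T\|_F^2 .
\]
The first term $\|\bm{O}\|_F^2$ is constant in $\bm{\hat{Z}}$ and can be dropped, while the last term is exactly $\tfrac{1}{\tau^2}R(\bm{\hat{Z}},\bm{\hat{Z}})$.

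Next I would rewrite the cross term via the cyclic property of the trace, $Tr(\bm{O}\bm{\hat{Z}}\bm{\hat{Z}}^T)=Tr(\bm{\hat{Z}}^T\bm{O}\bm{\hat{Z}})$, and substitute $\bm{O}=\bm{I}-\bm{L}$ with $\bm{L}$ the normalized Laplacian of the statement, giving $Tr(\bm{\hat{Z}}^T\bm{O}\bm{\hat{Z}})=Tr(\bm{\hat{Z}}^T\bm{\hat{Z}})-Tr(\bm{\hat{Z}}^T\bm{L}\bm{\hat{Z}})$. This is where the unit-norm hypothesis on the $\bm{\hat{z}}_i$ enters: $Tr(\bm{\hat{Z}}^T\bm{\hat{Z}})=\sum_{i=1}^n\|\bm{\hat{z}}_i\|^2=n$ is again a constant. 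Collecting the non-constant terms yields $\mathcal{L}_c=\tfrac{2}{\tau}Tr(\bm{\hat{Z}}^T\bm{L}\bm{\hat{Z}})+\tfrac{1}{\tau^2}R(\bm{\hat{Z}},\bm{\hat{Z}})$ up to a $\bm{\hat{Z}}$-independent constant (namely $\|\bm{O}\|_F^2-\tfrac{2n}{\tau}$), and multiplying through by $\tau>0$ produces $2Tr(\bm{\hat{Z}}^T\bm{L}\bm{\hat{Z}})+\tfrac{1}{\tau}R(\bm{\hat{Z}},\bm{\hat{Z}})$, which has the same minimizers; this is precisely Eq.~\ref{lemma2_1}.

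There is no genuine obstacle here — the argument is routine algebra — so the only care needed is bookkeeping. I would state explicitly that ``equivalent to minimizing'' means ``has the same set of minimizers,'' which is what justifies dropping the additive constant and applying the overall positive rescaling by $\tau$; I would point out that the symmetry of $\bm{O}$ (it is the standard symmetric normalization $\bm{D}^{-1/2}\bm{W}\bm{D}^{-1/2}$ of the fused affinity $\bm{W}=\alpha\bm{M}\bm{M}^T+(1-\alpha)\tfrac{1}{v}\sum_a\bm{A}^a$, with $\bm{D}_{ii}=\sum_j\bm{W}_{ij}$) makes $\bm{L}=\bm{I}-\bm{O}$ a bona fide normalized graph Laplacian; and I would recall the smoothness identity $Tr(\bm{\hat{Z}}^T\bm{L}\bm{\hat{Z}})=\tfrac{1}{2}\sum_{i,j}\bm{W}_{ij}\|\bm{\hat{z}}_i/\sqrt{\bm{D}_{ii}}-\bm{\hat{z}}_j/\sqrt{\bm{D}_{jj}}\|^2$, which licenses interpreting the first term of Eq.~\ref{lemma2_1} as the spectral-clustering penalty that motivates the lemma's name, with $R(\bm{\hat{Z}},\bm{\hat{Z}})$ acting as a regularizer that prevents the degenerate all-equal solution.
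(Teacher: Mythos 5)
Your proof is correct and follows essentially the same route as the paper's: expand the squared Frobenius norm, discard the $\bm{\hat{Z}}$-independent terms, use the unit-norm assumption to turn the cross term $Tr(\bm{\hat{Z}}^T\bm{O}\bm{\hat{Z}})$ into $-Tr(\bm{\hat{Z}}^T\bm{L}\bm{\hat{Z}})$ plus a constant, and absorb the overall factor of $\tau$. The only difference is cosmetic — you work in matrix/trace form where the paper adds and subtracts $2\bm{o}_{ij}$ entrywise (and invokes $\sum_j\bm{o}_{ij}=1$, which your bookkeeping via $Tr(\bm{\hat{Z}}^T\bm{\hat{Z}})=n$ neatly sidesteps).
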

\noindent\textbf{See proof in Appendix~\ref{appendix_3}.}

\noindent\textbf{Remark:} Based on Lemma~\ref{lemma_2}, $\mathcal{L}_C$ can be considered as the graph spectral clustering. Graph spectral clustering~\cite{DBLP:journals/sac/Luxburg07} aims to find clusters that minimize connections between different clusters while maximizing the connections within each cluster. Traditional graph spectral clustering~\cite{DBLP:journals/neco/BelkinN03} aims to find the embedding $\bm{\bar{H}}$ such that $Tr(\bm{\bar{H}}^T \bm{L}'\bm{\bar{H}})$ is minimized, where $\bm{L}'$ is the normalized graph Laplacian. The first term of Eq.~\ref{lemma2_1} is similar to the objective function in traditional graph spectral clustering, but we enhance it by incorporating the similarity measurement  $\bm{S}_{ij}=\bm{\bar{M}}_i\bm{\bar{M}}_j^T$ into the normalized graph Laplacian. By including the similarity measurement in the graph spectral clustering, we reinforce $\mathcal{L}_C$ to mitigate the bias introduced by the clustering method defined in Eq.~\ref{graph_pooling}. It is important to note that in Lemma~\ref{lemma_2}, the constraint that $\bm{\bar{h}}_i$ and $\bm{\bar{h}}_j$ are unit vectors is a common practice in many existing works~\cite{DBLP:conf/nips/HaoChenWGM21, DBLP:conf/nips/ZhuSK21, DBLP:conf/icml/0001I20}. This constraint can be easily implemented through normalization, \ie, $\bm{\bar{h}}_i=\frac{\bm{\bar{h}}_i}{||\bm{\bar{h}}_i||_2}$. 

\subsection{Objective Function and Inference}
Now, we are ready to introduce the overall objective function:
\begin{equation}
    \label{overall}
    \begin{split}
        \min J &= -\sum_{u_i}\mathcal{L}_A(u_i)  + \lambda \mathcal{L}_C
    \end{split}
\end{equation}
where $\mathcal{L}_A$ is the cluster-aware node affinity loss and $\mathcal{L}_C$ is the similarity-guided graph contrastive loss. $\lambda$ is a constant parameter balancing these two terms. During the inference stage, we directly use the cluster-aware node affinity loss $\mathcal{L}_A$ as the abnormal score:
\begin{equation}
    score_i = -\mathcal{L}_A(u_i) 
\end{equation}

\subsection{Why can't weakly supervised contrastive learning loss be used as a regularization?}
\label{Contrastive_regularization}
One effective remedy to reduce uncertainty and learn high-quality representations in an unsupervised setting is through contrastive learning loss~\cite{zheng2024drgnn, DBLP:conf/emnlp/LanZMK24, DBLP:conf/nips/HaoChenWGM21, DBLP:conf/icml/SaunshiPAKK19, li2024can, jing2021hdmi, jing2022coin, jing2022x, jing2024sterling, jing2024automated}, which has demonstrated significant performance improvements in representation quality. However, an existing study~\cite{DBLP:conf/kdd/ZhengXZH22} has theoretically proved that simply applying vanilla contrastive learning loss (\ie, InfoNCE~\cite{oord2018representation}) can easily lead to the suboptimal solution. Similarly, according to ~\cite{DBLP:conf/icml/0001I20}, both normal and abnormal nodes are uniformly distributed in the unit hypersphere in the latent space by minimizing the vanilla contrastive learning loss, which leads to worse performance for the anomaly detection task. To address this issue, many weakly supervised contrastive losses~\cite{DBLP:conf/iccv/Zheng0Y0Z0021, DBLP:conf/iclr/TsaiLLLSM22, DBLP:conf/kdd/ZhengJLTH24} are proposed by incorporating the semantic information, such as the clustering results, into a contrastive regularization term as follows: 
\begin{equation}
\label{L_1}
    \mathcal{L}_3 = -\sum_{i=1}^n\sum_{j\in C(i), j\neq i}\log\frac{\text{sim}(\bm{\bar{h}}_i, \bm{\bar{h}}_j)}{\text{sim}(\bm{\bar{h}}_i, \bm{\bar{h}}_j) + \sum_{k\notin C(i)}\text{sim}(\bm{\bar{h}}_i, \bm{\bar{h}}_k)}
\end{equation}
where $\bm{\bar{h}}_i=\sum_{a=1}^v \bm{h}_i^a$ is the representation for node $u_i$ aggregated over all views, $\text{sim}(\bm{\bar{h}}_i, \bm{\bar{h}}_j)=\exp(\bm{\bar{h}}_i\bm{\bar{h}}_j^T/\tau)$ and $\tau$ is the temperature. $j\in C(i)$ means that node $u_j$ and node $u_i$ are assigned into the same cluster or form a positive pair, while $k\notin C(i)$ means that node $u_k$ and node $u_i$ are assigned into two different clusters, resulting in a negative pair. The intuition of the above equation is that if two nodes are from the same cluster, they should be close in the latent space by maximizing their similarity. 
However, we find out that the construction of the positive and negative pairs in Eq.~\ref{L_1} heavily relies on the quality of the soft assignment, while directly converting the soft assignment to the binary membership inevitably introduces bias/noise during the training phase. This bias will be amplified further in the node affinity learning as we mentioned in Subsection~\ref{graph_pool}. (We also validate this in the ablation study in Subsection \ref{Ablation_study}.) Here, we theoretically analyze that including this bias in the weakly supervised contrastive loss defined in Eq.~\ref{L_1} leads to suboptimal solution. Formally, we first define what is a true positive pair and a false positive pair respectively, and then introduce Theorem~\ref{theorem_1} to show the issue in Eq.~\ref{L_1}. 

\begin{definition}
    \label{definition_1}
    Given a sample $\bm{x_i}$, we say ($\bm{x}_i$, $\bm{x}_j$) is a true positive pair (or a false negative pair), if their optimal representations satisfy $\exp(\bm{\bar{h}}_i\bm{\bar{h}}_j^T/\tau) > 1$ for a small positive value $\tau$. Similarly, we say ($\bm{x}_i$, $\bm{x}_k$) is a false positive pair (or a true negative pair), if their optimal representations satisfy $\exp(\bm{\bar{h}}_i\bm{\bar{h}}_k^T/\tau) \approx 0$ for a small positive value $\tau$. 
\end{definition}

\begin{theorem}
\textit{
Given the contrastive learning loss function $\mathcal{L}_3$, if there exists one false positive sample in the batch during training, the contrastive learning loss will lead to a sub-optimal solution.}
\label{theorem_1}
\end{theorem}
\noindent\textbf{See proof in Appendix~\ref{appendix_1}.}
\section{Experimental Results}
In this section, we demonstrate the performance of our proposed framework in terms of both effectiveness and efficiency by comparing it with state-of-the-art methods.

\begin{table}[tp]
\centering  
\caption{Statistics of the datasets, including the number of nodes, anomalies, and edges for two views.}
\vspace{-3mm}
\label{TB:Net}
\begin{tabular}{ccccccc}
\hline Name & $|\bm{V}|$  & $|\bm{E}^1|$ & $|\bm{E}^2|$ & \# Anomalies\\
\hline BlogCatalog    & 5,196       & 171,743       & -             & 298\\
\hline Amazon         & 10,244      & 175,608       & -             & 445\\
\hline YelpChi        & 24,741      & 49,315        & -             & 597 \\
\hline CERT           & 1,000       & 24,213        & 22,467        & 70\\
\hline IMDB           & 4,780       & 1,811,262     & 419,883       & 334 \\
\hline DBLP           & 4,057       & 299,499       & 520,440     & 283 \\
\hline
\end{tabular}
\vspace{-3mm}
\end{table}

\subsection{Experiment Setup}
\subsubsection{Datasets:} We evaluate the performance of our proposed framework on six datasets for both single-view and multi-view graph anomaly detection scenarios, including the Insider Threat Test (CERT)~\cite{DBLP:conf/sp/GlasserL13}, DBLP~\cite{DBLP:conf/www/WangJSWYCY19}, IMDB~\cite{DBLP:conf/www/WangJSWYCY19}, BlogCatalog~\cite{DBLP:conf/kdd/TangL09}, Amazon~\cite{DBLP:conf/cikm/DouL0DPY20} and YelpChi~\cite{DBLP:conf/kdd/KumarZL19} datasets. 
Among these datasets, CERT, IMDB, and DBLP are multi-view graphs, while BlogCatalog, Amazon, and YelpChi are single-view graphs. CERT, Amazon, and YelpChi are real-world datasets, whereas IMDB, DBLP, and BlogCatalog are semi-synthetic graphs. (See Appendix \ref{generate_anomaly} for the details of generating anomalous nodes.)
Specifically, the CERT dataset is a collection of synthetic insider threat test datasets that provides both synthetic background data and data from synthetic malicious actors. This dataset does not include a feature matrix, so we use node2vec~\cite{grover2016node2vec} to extract two feature matrices as two views. IMDB is a movie network, where each node corresponds to a movie, and two adjacency matrices indicate whether two movies share the same actor or director. DBLP is a citation network, where each node corresponds to an academic research paper, and two adjacency matrices indicate whether two papers share the same authors or if one paper cites another. Amazon is a review network, where each node represents a product in the musical instruments category, and its attributes are extracted from product reviews. Similarly, the Yelp dataset contains hotel and restaurant reviews, either filtered (spam) or recommended (legitimate) by Yelp~\cite{DBLP:conf/cikm/DouL0DPY20}. The statistics of these graphs are summarized in Table~\ref{TB:Net}.

\subsubsection{Experiment Setting:}
The neural network structure of the proposed framework is GCN~\cite{DBLP:conf/iclr/KipfW17}. The hyper-parameters $\alpha$ and $\lambda$ for each dataset are specified in Appendix~\ref{reproducibility}. In all experiments, we set the initial learning rate to be 1e-5, the hidden feature dimension to be 128 and use Adam~\cite{DBLP:journals/corr/KingmaB14} as the optimizer. The similarity function $\text{sim}(a,b)$ is defined as $\text{sim}(a, b)=\exp(\frac{a \cdot b^T}{|a||b|})$. We use TAM as the backbone of our method to capture local node affinity. The number of GCN layers is set to 2. The experiments are performed on a Windows machine with a 24GB RTX 4090 GPU. 

\subsubsection{Evaluation Metrics:}
Following~\cite{DBLP:conf/nips/QiaoP23, DBLP:conf/kdd/PangHSC21}, all methods are evaluated based on Area Under the Receiver Operating Characteristic Curve (AUROC) and Area Under the Precision-Recall Curve (AUPRC). Higher AUROC/AUPRC indicates better performance. All of the experiments are repeated five times with different random seeds and the mean and standard deviation are reported. 

\subsubsection{Baseline Methods:}
In our experiments, we compare our proposed framework \method\ with state-of-the-art methods in the following two settings. 

For \textit{single-view graphs}, we compare \method\ with the following eight baseline methods: (1). \textbf{ANOMALOUS}~\cite{DBLP:conf/ijcai/PengLLLZ18}: a shallow method, jointly conducting attribute selection and anomaly detection as a whole based on CUR decomposition and residual analysis; (2). \textbf{Dominant}~\cite{DBLP:conf/sdm/DingLBL19}: a graph auto-encoder-based deep neural network model for graph anomaly detection, which encodes both the topological structure and node attributes to node embedding; (3). \textbf{CoLA}~\cite{DBLP:journals/tnn/LiuLPGZK22}: a contrastive self-supervised graph anomaly detection method by exploiting the local information; (4). \textbf{SLGAD}~\cite{zheng2021generative}: an unsupervised framework for outlier detection based on unlabeled in-distribution data, which uses contrastive learning loss as a regularization; (5). \textbf{HCM-A}~\cite{DBLP:conf/pkdd/HuangPMP22}: a self-supervised learning by forcing the prediction of the shortest path length between pairs of nodes; (6). \textbf{ComGA}~\cite{DBLP:conf/wsdm/Luo0BYZWX22}: a community-aware attributed graph anomaly detection framework; (7). \textbf{CONAD}~\cite{DBLP:conf/pakdd/XuHZDL22}: a contrastive learning-based graph anomaly detection method; (8). \textbf{TAM}~\cite{DBLP:conf/nips/QiaoP23}: an unsupervised anomaly method, proposing a scoring measure by assigning large score to the nodes that are less affiliated with their neighbors.

For \textit{multi-view graphs}, we compare \method\ with the following five baseline methods: (1). \textbf{MLRA}~\cite{DBLP:conf/sdm/LiSF15}: a multi-view non-negative matrix factorization-based method for anomaly detection, which performs cross-view low-rank analysis for revealing the intrinsic structures of data; (2). \textbf{NSNMF}~\cite{ahmed2021neighborhood}: an NMF based method, incorporating the neighborhood structural similarity information into the NMF framework to improve the anomaly detection performance; (3). \textbf{SRLSP}~\cite{DBLP:journals/tkdd/WangCLFZZ23}: a multi-view detection method based on the local similarity relation and data reconstruction; (4). \textbf{NCMOD}~\cite{DBLP:conf/aaai/Cheng0L21}: an auto-encoder-based multi-view anomaly detection method, which proposes neighborhood consensus networks to encode graph neighborhood information; (5) We also report the performance of \textbf{TAM}~\cite{DBLP:conf/nips/QiaoP23} on these multi-view graphs by averaging the adjacency matrix or concatenating two feature matrices together. We omit the comparison with other baseline methods since TAM outperforms them.

\begin{table*}
\caption{Results on multi-view graphs (\ie, CERT, IMDB, DBLP) with respect to AUROC and AUPRC. We boldface the best performance and underline the second-best.}
\centering
\vspace{-3mm}
\begin{tabular}{c|cc|cc|cc}
\hline \multirow{2}{*}{\textbf{Method}}  & \multicolumn{2}{c|}{\textbf{CERT}} & \multicolumn{2}{c|}{\textbf{IMDB}}  & \multicolumn{2}{c}{\textbf{DBLP}}  \\
 & AUPRC & AUROC & AUPRC & AUROC & AUPRC & AUROC \\ \hline
\hline MLRA    & 0.0379 $\pm$ 0.001  & 0.3829 $\pm$ 0.003  & 0.2695 $\pm$ 0.007  & 0.5926 $\pm$ 0.005 & 0.2211 $\pm$ 0.005  & 0.5568 $\pm$ 0.005 \\ 
NSNMF          & 0.0704 $\pm$ 0.001  & 0.4578 $\pm$ 0.001  & 0.0634 $\pm$ 0.000  & 0.4969 $\pm$ 0.001 & 0.1436 $\pm$ 0.007  & 0.6418 $\pm$ 0.001 \\
NCMOD          & 0.0749 $\pm$ 0.001  & 0.5133 $\pm$ 0.001 & \underline{0.6629 $\pm$ 0.013}  & 0.8030 $\pm$ 0.007 & \underline{0.4809 $\pm$ 0.006}  & \underline{0.7271 $\pm$ 0.004} \\
SRSLP	       & \underline{0.0806 $\pm$ 0.007}  & \underline{0.5405 $\pm$ 0.003} & 0.5552 $\pm$ 0.017  & 0.7343 $\pm$ 0.003 & 0.0643 $\pm$ 0.002  & 0.5228 $\pm$ 0.001 \\ 
TAM            & 0.0771 $\pm$ 0.007  & 0.5400 $\pm$ 0.005  & 0.6521 $\pm$ 0.016  & \underline{0.8233 $\pm$ 0.013} & 0.3466 $\pm$ 0.016  & 0.6690 $\pm$ 0.005 \\
\method        & \textbf{0.1198 $\pm$ 0.003}  & \textbf{0.6056 $\pm$ 0.001}  & \textbf{0.8968 $\pm$ 0.038} &	\textbf{0.9370 $\pm$ 0.029} &	\textbf{0.8495 $\pm$ 0.005} &	\textbf{0.8696 $\pm$ 0.006} \\
\hline
\end{tabular}
\label{table_multi_view_result}
\end{table*}

\begin{table*}
\caption{Results on single-view datasets (\ie, BlogCatalog, Amazon, YelpChi) with respect to AUROC and AUPRC. We boldface the best performance and underline the second-best.}
\centering
\vspace{-3mm}
\begin{tabular}{c|cc|cc|cc}
\hline \multirow{2}{*}{\textbf{Method}}  & \multicolumn{2}{c|}{\textbf{BlogCatalog}} & \multicolumn{2}{c|}{\textbf{Amazon}}  & \multicolumn{2}{c}{\textbf{YelpChi}}  \\
 & AUPRC & AUROC & AUPRC & AUROC & AUPRC & AUROC \\ \hline
\hline ANOMALOUS    & 0.0652 $\pm$ 0.005   & 0.5652 $\pm$ 0.025 & 0.0558 $\pm$ 0.001   & 0.4457 $\pm$ 0.005  & 0.0519 $\pm$ 0.002   & 0.4956 $\pm$ 0.003 \\ 
Dominant            & 0.3102 $\pm$ 0.011   & 0.7590 $\pm$ 0.010 & 0.1424 $\pm$ 0.002   & 0.5996 $\pm$ 0.002  & 0.0395 $\pm$ 0.020   & 0.4133 $\pm$ 0.100 \\
CoLA                & 0.3270 $\pm$ 0.000   & 0.7746 $\pm$ 0.009 & 0.0677 $\pm$ 0.001   & 0.5898 $\pm$ 0.011  & 0.0448 $\pm$ 0.002   & 0.4636 $\pm$ 0.001 \\
SLGAD               & 0.3882 $\pm$ 0.007   & 0.8123 $\pm$ 0.002 & 0.0634 $\pm$ 0.005   & 0.5937 $\pm$ 0.005  & 0.0350 $\pm$ 0.000   & 0.3312 $\pm$ 0.035 \\
HCM-A	            & 0.3139 $\pm$ 0.001   & 0.7980 $\pm$ 0.004 & 0.0527 $\pm$ 0.015   & 0.3956 $\pm$ 0.014  & 0.0287 $\pm$ 0.012   & 0.4593 $\pm$ 0.005 \\ 
ComGA               & 0.3293 $\pm$ 0.028   & 0.7683 $\pm$ 0.004 & 0.1153 $\pm$ 0.005   & 0.5895 $\pm$ 0.010  & 0.0423 $\pm$ 0.000   & 0.4391 $\pm$ 0.000 \\
CONAD	            & 0.3284 $\pm$ 0.004   & 0.7807 $\pm$ 0.003 & 0.1372 $\pm$ 0.009   & 0.6142 $\pm$ 0.008  & 0.0405 $\pm$ 0.002   & 0.4588 $\pm$ 0.003 \\
TAM                 & \textbf{0.4182 $\pm$ 0.225}   & \textbf{0.8248 $\pm$ 0.003} & \underline{0.2634 $\pm$ 0.008}   & \underline{0.7064 $\pm$ 0.008}  & \underline{0.0778 $\pm$ 0.009}   & \underline{0.5643 $\pm$ 0.007} \\
\method             & \underline{0.4043 $\pm$ 0.010}   & \underline{0.8194 $\pm$ 0.003} & \textbf{0.6563 $\pm$ 0.011}   & \textbf{0.8656 $\pm$ 0.002}  & \textbf{0.1218 $\pm$ 0.003}   & \textbf{0.7516 $\pm$ 0.003} \\
\hline
\end{tabular}
\label{table_single_view_result}
\end{table*}


\subsection{Experimental Analysis}

\subsubsection{Multi-view Graph Scenario.}
In this subsection, we evaluate the performance of \method\ by comparing it with five baseline methods on three multi-view graphs, \ie, CERT, IMDB and DBLP datasets. The experimental results with respect to AUROC and AUPRC are presented in Table~\ref{table_multi_view_result}. We observe the following: (i) MLRA and NSNMF exhibit the poor performance among all baseline methods across most datasets. This can be attributed to their design, which is optimized for independent and identically distributed data (e.g., image data). Thus, these two methods struggle to capture graph topological information, resulting in lower performance in both AUPRC and AUROC. (ii) In contrast, NCMOD incorporates graph topological information into the learned representation through its proposed neighborhood consensus networks, thus achieving the second-best performance on the IMDB and DBLP datasets. (iii) TAM suffers from the inability to handle multi-view graphs, performing worse than several self-supervised learning methods designed for multi-view graphs (e.g., NCMOD). (iv) Our proposed method outperforms all baselines in terms of AUROC and AUPRC across all datasets. Notably, graph-based anomaly detection methods designed for single views (e.g., TAM) falter in the presence of view heterogeneity as simply concatenating input features from multiple views distorts the feature space, resulting in significant performance challenges. In contrast, our method excels by encoding both local and global node affinities in the learned representation and mitigating biases through the proposed similarity-guided graph contrastive loss.


\subsubsection{Single-view Graph Scenario.} Next, we evaluate the performance of \method\ by comparing it to eight baseline methods on three single-view graphs, \ie, BlogCatalog, Amazon and YelpChi datasets. The experimental results with respect to AUROC and AUPRC are presented in Table~\ref{table_single_view_result}. We have the following observations: (i) TAM outperforms most baseline methods across three single-view graphs. We attribute its great performance to the design of normal structure-preserved graph truncation to remove edges connecting normal and abnormal nodes.  
(ii) While TAM excels on BlogCatalog, \method\ still ranks as a close second, showcasing its robustness in handling single-view graphs. (iii) \method\ proves to be a highly competitive approach, demonstrating superior performance on the Amazon and YelpChi datasets. Notably, \method\ surpasses TAM, the second-best method, by over 15.9\% in AUROC and 39.3\% in AUPRC on the Amazon dataset. Similarly, on the YelpChi dataset, \method\ improves AUROC by over 18.7\% compared to TAM. The ability of \method\ to capture both local and global node affinity information in the learned representations enables it to perform effectively on the Amazon and YelpChi datasets. By leveraging its similarity-guided graph contrastive loss, \method\ enhances representation learning, allowing it to better distinguish between normal and anomalous nodes. Overall, these results emphasize that while TAM leverages truncated affinity maximization techniques to tailor the raw adjacency matrix, \method\ offers a more flexible and powerful approach by incorporating global and local affinities, making it a highly effective method in the single-view graph anomaly detection scenario. Its ability to significantly outperform other baselines on challenging datasets like Amazon and YelpChi demonstrates the effectiveness of the proposed method.

\begin{table*}
\caption{Ablation study on BlogCatalog, Amazon, DBLP and IMDB datasets with respect to AUROC.}
\vspace{-3mm}
\centering
\begin{tabular}{*{6}{c}}
\hline \textbf{Model} & BlogCatalog   & Amazon  & DBLP & IMDB & AVERAGE\\ \hline
\hline \method              & \textbf{0.8194 $\pm$ 0.000} & \textbf{0.8656 $\pm$ 0.002} & \textbf{0.8868 $\pm$ 0.007} & \textbf{0.8792 $\pm$ 0.031} & \textbf{0.8628} \\
\hline \method-A            & 0.8144 $\pm$ 0.000 & 0.8514 $\pm$ 0.003 & 0.4831 $\pm$ 0.002 & 0.2138 $\pm$ 0.013 & 0.5907 \\ 
\hline \method-G            & 0.6313 $\pm$ 0.002 & 0.8645 $\pm$ 0.004 & 0.8639 $\pm$ 0.003 & 0.4996 $\pm$ 0.001 & 0.7148 \\
\hline \method-InfoNCE      & 0.7685 $\pm$ 0.001 & 0.7737 $\pm$ 0.006 & 0.8560 $\pm$ 0.005 & 0.8452 $\pm$ 0.032 & 0.8108 \\
\hline \method-WSC          & 0.7639 $\pm$ 0.011 & 0.8019 $\pm$ 0.004 & 0.8538 $\pm$ 0.006 & 0.8581 $\pm$ 0.005 & 0.8194 \\
\hline
\end{tabular}
\label{table_ablation_study}
\end{table*} 

\subsubsection{Ablation Study}
\label{Ablation_study}
In this subsection, we conduct an ablation study to demonstrate the necessity of each component of \method\ and validate the effectiveness of similarity-guided graph contrastive loss over vanilla and weakly supervised contrastive loss. Specifically, \method-A removes the similarity measurement of soft membership and replaces $\mathcal{L}_A$ with $\mathcal{L}_2$. \method-G refers to a variant of our proposed method by removing similarity-guided contrastive learning loss. \method-InfoNCE replaces the similarity-guide contrastive loss with the vanilla contrastive loss while \method-WSC substitutes it with the weakly supervised loss as shown in Eq.~\ref{L_1}. The experimental results with respect to AUROC on the BlogCatalog, Amazon, DBLP, and IMDB datasets are presented in Table~\ref{table_ablation_study}. Our observations are as follows: (1). \textbf{Global Node Affinity}: \method\ shows slight improvements on Amazon and BlogCatalog compared to \method-A. However, excluding the global node affinity matrix (\ie, the similarity map of the soft membership) in \method-A results in significant performance drops of approximately 40\% on DBLP and 56\% on IMDB, highlighting the importance of capturing global affinities in certain datasets.
(2). \textbf{Similarity-Guided Loss}: Removing the similarity-guided graph contrastive loss (i.e., \method-G) leads to an average AUROC performance drop of 15\% across the four datasets, underscoring the critical role of this regularization in mitigating bias and improving performance.
(3). \textbf{Contrastive Loss Substitutions}: Replacing the similarity-guided loss with either the vanilla contrastive loss (i.e., \method-InfoNCE) or the weakly supervised contrastive loss (i.e., \method-WSC) results in a 4\% to 5\% performance reduction on average. This aligns with the theoretical analysis in Theorem~\ref{theorem_1}, suggesting that neither the vanilla nor weakly supervised contrastive losses are as effective in mitigating bias as the similarity-guided approach.
These results validate the necessity of the global affinity information and the similarity-guided contrastive loss, both of which are essential for the robustness and effectiveness of \method.


\begin{figure*}[h]
\begin{center}
\begin{tabular}{cccc}
    \includegraphics[width=0.21\linewidth]{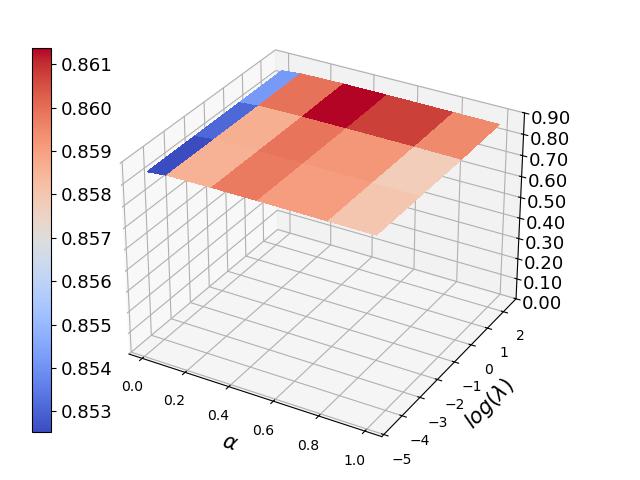} &
    \includegraphics[width=0.21\linewidth]{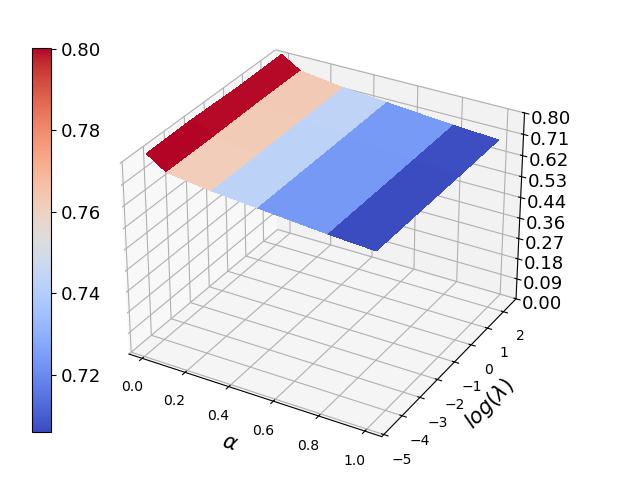} &
    \includegraphics[width=0.21\linewidth]{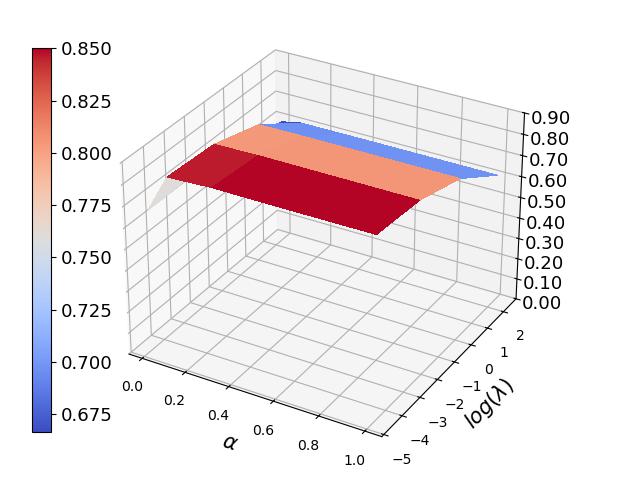} &
    \includegraphics[width=0.21\linewidth]{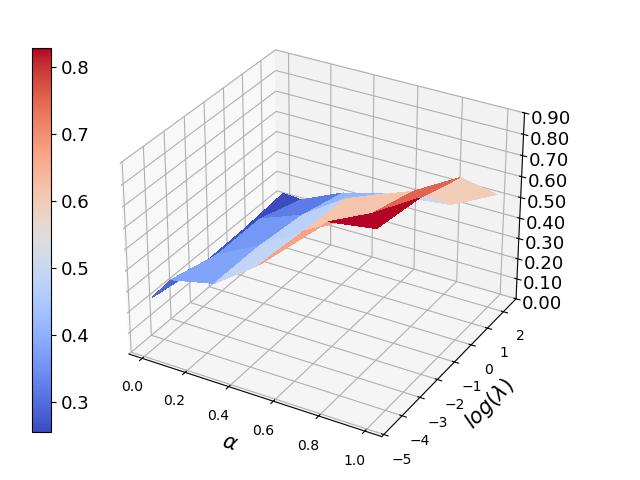}\\
    (a) Amazon dataset & (b) BlogCatalog dataset & (c) DBLP dataset & (d) IMDB dataset \\
\end{tabular}
\end{center}
\caption{$\alpha$, $\log(\lambda)$ v.s. AUROC on four datasets.}
\label{fig:parameter_analysis_alpha_and_lambda}
\end{figure*}

\begin{figure*}[h]
\begin{center}
\begin{tabular}{cccc}
    \includegraphics[width=0.21\linewidth]{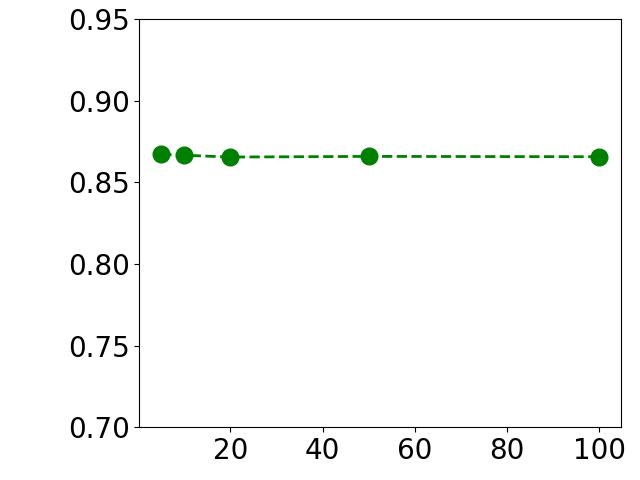} &
    \includegraphics[width=0.21\linewidth]{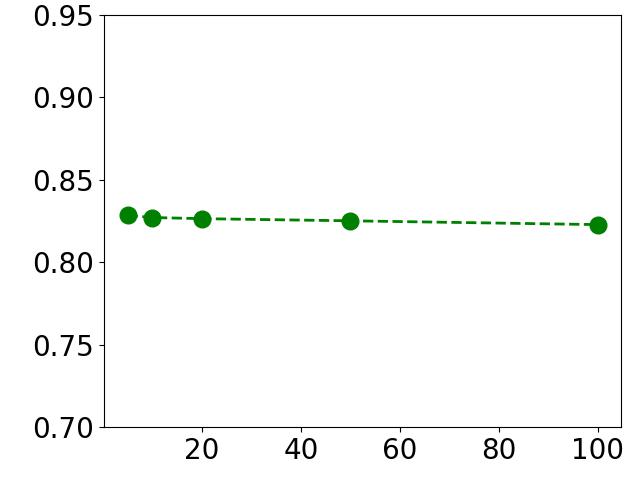} &
    \includegraphics[width=0.21\linewidth]{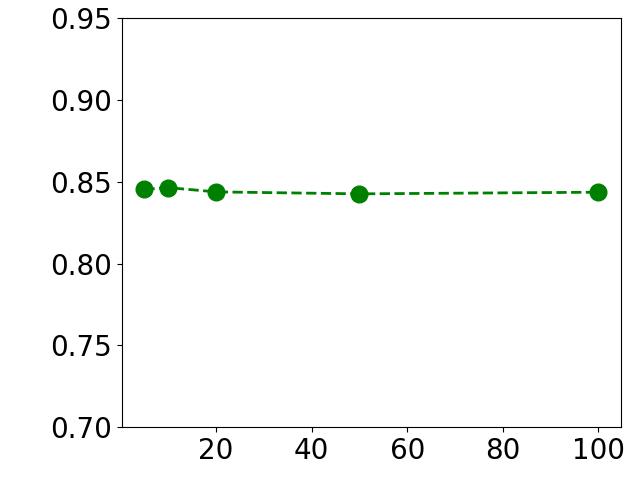} &
    \includegraphics[width=0.21\linewidth]{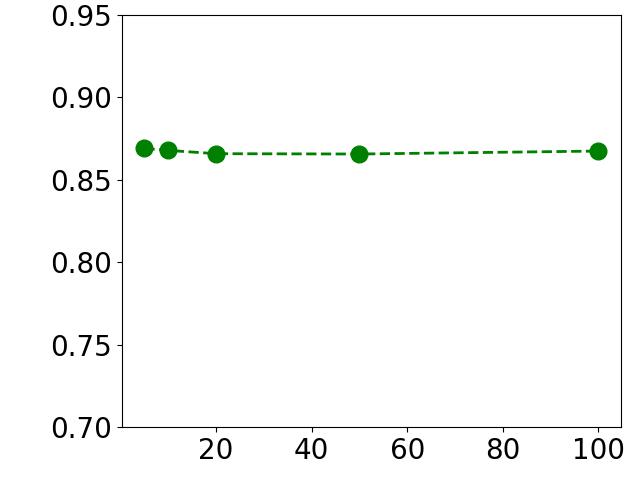}\\
    (a) Amazon dataset & (b) BlogCatalog dataset & (c) DBLP dataset & (d) IMDB dataset \\
\end{tabular}
\end{center}
\vspace{-2mm}
\caption{ The number of clusters v.s. AUROC on four datasets.}
\vspace{-2mm}
\label{fig:parameter_analysis}
\end{figure*}

\subsubsection{Parameter Analysis}
In this subsection, we delve into the parameter sensitivity analysis of the \method\ framework on the four datasets, specifically exploring the impact of $\alpha$, $\lambda$, and the number of clusters. In the experiment, the mean and standard deviation of AUROC over five rounds are reported. We fix the number of clusters to be 10 and vary the values of $\alpha$ and $\lambda$ while recording the AUROC of \method. The performance is visualized in Figure~\ref{fig:parameter_analysis_alpha_and_lambda}, where the x, y, and z axes represent $\alpha$, $\log(\lambda)$, and AUROC, respectively. From observation, \method\ achieves the best performance when $\alpha=0.4$ on the Amazon and DBLP datasets, while it reaches its peak performance at $\alpha=0.01$ on the BlogCatalog dataset and $\alpha=1$ on the IMDB dataset. Upon investigation, we find that the number of edges in the first view on the IMDB dataset is unusually large compared to the number of edges in the second view as presented in Table~\ref{TB:Net}. Due to this unusual pattern, we hypothesize that the raw adjacency matrix is not reliable and thus the global node affinity (\ie, similarity of soft assignment) plays a crucial role in detecting anomalous nodes. Thus, by setting $\alpha=1$, we exclude the local node affinity in the loss function $\mathcal{L}_A$ and therefore \method\ achieves better performance solely relying on the global node affinity. In terms of the parameter analysis on $\lambda$, we find that \method\ prefers a smaller value of $\lambda$ (\eg, $\lambda=0.1$). One explanation for this is that a large value of $\lambda$ tends to overly dominate the optimization of the overall objective function as $\lambda$ is used to balance the importance between the similarity-guided contrastive loss and the node affinity learning loss. In the subsequent experiment, we scrutinize the role of the number of clusters in shaping the anomaly detection criteria. Figure~\ref{fig:parameter_analysis} illustrates the results, with the x and y axes representing the values of the number of clusters and AUROC, respectively. We observe that changing the number of the clusters does not greatly influence the performance of \method\ across these four datasets. Notably, \method\ achieves better performance when the number of clusters is 5 or 10 on most datasets, suggesting that the proposed method prefers a small number of clusters. By reducing the number of clusters, our method tends to generalize better by capturing larger, more meaningful patterns in the data instead of overfitting to noise.

\subsubsection{Efficiency Analysis}
\label{efficiency_analysis}
\textbf{Time Complexity.} In this subsection, we analyze the time complexity of our proposed~\method. Assume that the graph has $n$ nodes, the input feature dimension is $d$, the hidden feature dimension is $f$, the number of nodes is $n$, the number of edges is $|E|$, and the number of clusters is $k$. For ease of explanation, we only consider the 1-layer case. Following \cite{blakely2021time}, the time complexity of computing the soft membership matrix using GCN is $O(ndk+n|E|k+n)$. The complexity of the GCN to capture the hidden representation $\bm{h}$ is $O(|E|d + ndf)$ with sparse computation. The complexity of the similarity-guided contrastive learning loss is $O(n^2f)$. However, in the experiments, we can use the sampling strategy to sample $p(p<<n)$ nodes and thus the complexity can be reduced to $O(p^2f)$. The total complexity of \method\ is $O(n(kd+fd+|E|k+1)+|E|d+p^2f)$.

\noindent\textbf{Running Time Analysis.} Next, we experiment on the YelpChi dataset to show the efficiency of our proposed~\method\ by changing the number of nodes and the number of layers. The reason why we only provide the efficiency analysis regarding the training/running time vs the number of nodes on the YelpChi dataset is that we can manually increase the number of nodes from 500 to 15,000 on this dataset. For the other datasets (e.g., IMDB, BlogCatalog, Amazon, DBLP), the number of nodes is less than 15,000 (see Table \ref{TB:Net} for details) and we cannot get the running time if the number of nodes is set to be a value larger than 15,000. In the first experiment, we fix the number of layers to be 1, the total number of iterations to be 10,000 and adjust the number of nodes by randomly selecting $k$ nodes, where $k\in[500, 1000, 2000, 5000, 10000, 15000]$. The experimental result is shown in Figure~\ref{fig:efficiency_analysis} (a). By observation, we find that the running time is quadratic to the number of nodes. In the second experiment, we fixed the number of nodes to be 2000, the total number of iterations to be 10,000, and adjusted the number of GCN layers from 1 layer to 4 layers. The experimental result is shown in Figure~\ref{fig:efficiency_analysis} (b). We observe that the running time is almost linear with respect to the number of layers.

\begin{figure}
\begin{center}
\begin{tabular}{cc}
    \includegraphics[width=0.40\linewidth]{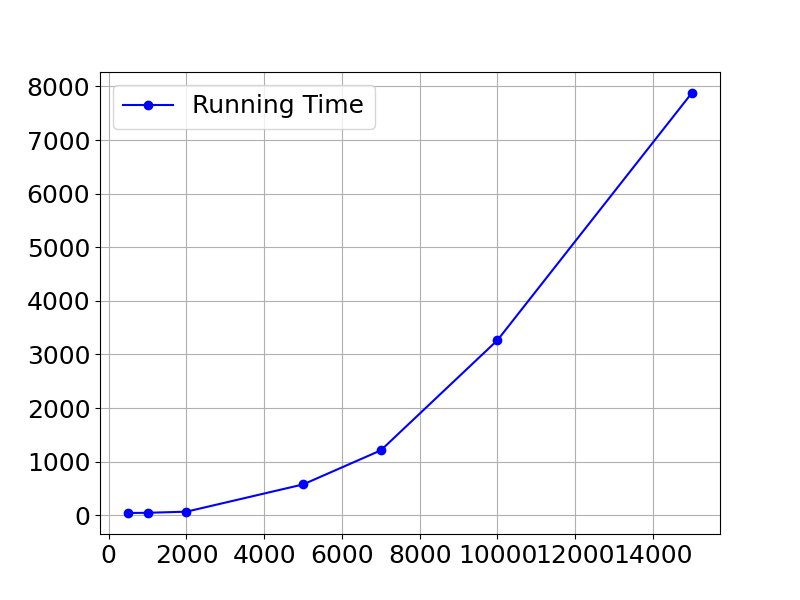} &
    \includegraphics[width=0.40\linewidth]{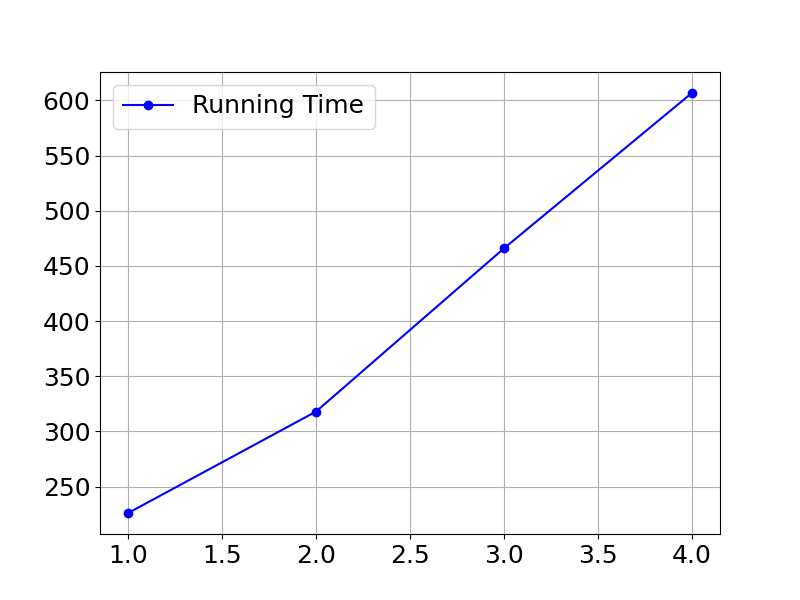} \\
    (a) The number of nodes vs &(b) The number of layers vs \\
    running time (in seconds) & running time (in seconds)\\
\end{tabular}
\end{center}
\vspace{-3mm}
\caption{Efficiency analysis on the YelpChi dataset}
\label{fig:efficiency_analysis}
\vspace{-3mm}
\end{figure}
\section{Conclusion}
In the past decades, graphs are ubiquitous and have been widely used in many real-world applications~\cite{ DBLP:conf/icde/Zheng0TXZH24, zheng2024pyg,  sun2021multi, sun2023all, DBLP:conf/cikm/ZhouZF0H22, DBLP:conf/kdd/ZhouZ0H20, DBLP:journals/fdata/ZhouZXH19, DBLP:conf/www/LiFH23, sun2021heterogeneous}. In this paper, we proposed a novel self-supervised framework for anomaly detection in multi-view graphs, addressing key limitations of existing methods. By capturing both local and global node affinities and leveraging a similarity-guided contrastive loss, our approach effectively identifies anomalous nodes without relying on strong structural assumptions. The proposed method not only augments the graph structure with learned soft membership assignments but also mitigates the negative impact of low-quality assignments through a robust loss function, theoretically connected to graph spectral clustering. Our experimental results on six datasets demonstrate the superior performance of the proposed framework in detecting anomalies in complex, multi-view graphs.

\begin{acks}
This work is supported by a research award from the C3.ai Digital Transformation Institute, and IBM-Illinois Discovery Accelerator Institute - a new model of an academic-industry partnership designed to increase access to technology education and skill development to spur breakthroughs in emerging areas of technology. Dr. John Birge was supported by the University of Chicago Booth School of Business. The views and conclusions are those of the authors and should not be interpreted as representing the official policies of the funding agencies or the government.
\end{acks}

\bibliographystyle{ACM-Reference-Format}
\balance
\bibliography{reference}

\appendix
\appendix

\section{Proof}
\subsection{Proof for Lemma \ref{lemma_1}}
\label{appendix_2}
\textbf{Lemma \ref{lemma_1}:} \textit{
(Similarity-guided Graph Contrastive Loss) Let $\bm{\bar{M}}$ be the output of a one-layer graph neural network defined in Eq.~\ref{graph_pooling}. Then, $\mathcal{L}_C$ is equivalent to the following loss function:
\begin{center}
    $\mathcal{L}_C = \mathcal{L}_f + C$
\end{center}
where $\mathcal{L}_f = -\sum_{i=1}^n\sum_{j=1}^n \log\frac{\exp(2\bm{\tilde{A}}_{ij}\bm{\bar{h}}_i\bm{\bar{h}}_j^T)}{\Pi_{k=1}^n \exp((\bm{\bar{h}}_i\bm{\bar{h}}_k^T)^2)^{1/n}}$ is a graph contrastive loss and $C$ is a constant.}

\begin{proof}
\begin{equation}
\begin{split}
    \nonumber \min_{\bar{H}} \mathcal{L}_C &=\min_{\bar{H}} ||\bm{\tilde{A}}-\bm{\bar{H}}\bm{\bar{H}}^T||_F^2 \\
    &=\min_{\bar{H}} \sum_{i=1}^n\sum_{j=1}^n (\bm{\tilde{A}}_{ij}-\bm{\bar{h}}_i\bm{\bar{h}}_j^T)^2 \\
    &=\min_{\bar{H}} \sum_{i=1}^n\sum_{j=1}^n (\bm{\tilde{A}}_{ij}^2-2\bm{\tilde{A}}_{ij}\bm{\bar{h}}_i\bm{\bar{h}}_j^T + (\bm{\bar{h}}_i\bm{\bar{h}}_j^T)^2)
\end{split}
\end{equation}
Notice that $\bm{\bar{M}}$ is independent of $\bm{\bar{H}}$. When we fix the parameter $\bm{\bar{M}}$ to update $\bm{\bar{H}}$, then $\bm{\tilde{A}}_{ij}^2$ can be considered as a constant in this optimization problem. Thus, we have 
\begin{equation}
\begin{split}
    \nonumber
    \min_{\bar{H}} \mathcal{L}_C &=\min_{\bar{H}} \sum_{i=1}^n\sum_{j=1}^n (-2\bm{\tilde{A}}_{ij}\bm{\bar{h}}_i\bm{\bar{h}}_j^T + (\bm{\bar{h}}_i\bm{\bar{h}}_j^T)^2) + C \\
    &=\min_{\bar{H}} \sum_{i=1}^n \sum_{j=1}^n(-2\bm{\tilde{A}}_{ij}\bm{\bar{h}}_i\bm{\bar{h}}_j^T +  \frac{1}{n}\sum_{k=1}^n(\bm{\bar{h}}_i\bm{\bar{h}}_k^T)^2) + C \\
    &=\min_{\bar{H}} -\sum_{i=1}^n \sum_{j=1}^n\log \frac{\exp(2\bm{\tilde{A}}_{ij}\bm{\bar{h}}_i\bm{\bar{h}}_j^T)}{\Pi_{k=1}^n\exp((\bm{\bar{h}}_i\bm{\bar{h}}_k^T)^2)^{\frac{1}{n}}} + C \\
    &= \min_{\bar{H}} \mathcal{L}_f + C \\
\end{split}
\end{equation}
where $\mathcal{L}_f = -\sum_{i=1}^n\sum_{j=1}^n \log\frac{\exp(2\bm{\tilde{A}}_{ij}\bm{\bar{h}}_i\bm{\bar{h}}_j^T)}{\Pi_{k=1}^n \exp((\bm{\bar{h}}_i\bm{\bar{h}}_k^T)^2)^{1/n}}$. Thus, we have $\mathcal{L}_C=\mathcal{L}_f + C$, which completes the proof. \\
\end{proof}

\subsection{Proof for Lemma \ref{lemma_2}}
\label{appendix_3}
\textbf{Lemma \ref{lemma_2}:}\textit{ (Graph Contrastive Spectral Clustering) Let $\bm{\bar{M}}$ be the output of a one-layer graph neural network defined in Eq.~\ref{graph_pooling} and $\bm{\bar{h}}_i$ and $\bm{\bar{h}}_j$ be unit vectors. Then, minimizing $\mathcal{L}_C$ is equivalent to minimizing the following loss function:
\begin{center}
    $\min \mathcal{L}_C = \min [2Tr(\bm{\bar{H}}^T \bm{L} \bm{\bar{H}}) + R(\bm{\bar{H}})]$
\end{center}
where $\bm{L}=\bm{I}-\bm{\tilde{A}}$ can be considered as the normalized graph Laplacian, $\bm{I}$ is the identity matrix and $R(\bm{\bar{H}})=||\bm{\bar{H}}\bm{\bar{H}}^T||_F^2$ is the regularization term.}

\begin{proof}
Based on the proof in Lemma~\ref{lemma_1}, we have
\begin{equation*}
\begin{split}
    \nonumber
    \min\mathcal{L}_C &= \sum_{i=1}^n\sum_{j=1}^n (-2\bm{\tilde{A}}_{ij}\bm{\bar{h}}_i\bm{\bar{h}}_j^T + (\bm{\bar{h}}_i\bm{\bar{h}}_j^T)^2)\\
    &=(\sum_{i=1}^n\sum_{j=1}^n (2\bm{\tilde{A}}_{ij}-2\bm{\tilde{A}}_{ij}-2\bm{\tilde{A}}_{ij}\bm{\bar{h}}_i\bm{\bar{h}}_j^T +  (\bm{\bar{h}}_i\bm{\bar{h}}_j^T)^2)\\
\end{split}
\end{equation*}
Notice that $\bm{\tilde{A}}=\tilde{D}^{-1/2}(\alpha \bm{\bar{M}}\bm{\bar{M}}^T + (1-\alpha)\frac{1}{v}\sum_{a=1}^v\bm{A}^a)\tilde{D}^{-1/2}$ and $\tilde{D}_{ii} =\sum_j (\alpha \bm{\bar{M}}_i\bm{\bar{M}}_j^T + (1-\alpha)\frac{1}{v}\sum_{a=1}^v\bm{A}_{ij}^a)$. We have $\sum_{j=1}^n \bm{\tilde{A}}_{ij}=1$ and thus $\sum_{i=1}^n\sum_{j=1}^n2\bm{\tilde{A}}_{ij}$ is a constant, which can be ignored in this optimization problem.
Since $\bm{\bar{h}}_i$ and $\bm{\bar{h}}_j$ are unit vectors, we have
\begin{equation}
\label{lemma2_proof}
\begin{split}
    \min\mathcal{L}_C &=\sum_{i=1}^n\sum_{j=1}^n (2\bm{\tilde{A}}_{ij}-2\bm{\tilde{A}}_{ij}\bm{\bar{h}}_i\bm{\bar{h}}_j^T + (\bm{\bar{h}}_i\bm{\bar{h}}_j^T)^2)\\
    &=\sum_{i=1}^n\sum_{j=1}^n \bm{\tilde{A}}_{ij}||\bm{\bar{h}}_i||_2^2 + \sum_{i=1}^n\sum_{j=1}^n\bm{\tilde{A}}_{ij}||\bm{\bar{h}}_j||_2^2 -\sum_{i=1}^n\sum_{j=1}^n2\bm{\tilde{A}}_{ij}\bm{\bar{h}}_i\bm{\bar{h}}_j^T \\
    & +\sum_{i=1}^n\sum_{j=1}^n (\bm{\bar{h}}_i\bm{\bar{h}}_j^T)^2 \\
    &=  \sum_{i=1}^n\bm{I}_{ii}||\bm{\bar{h}}_i||_2^2 + \sum_{j=1}^n\bm{I}_{jj}||\bm{\bar{h}}_j||_2^2 -\sum_{i=1}^n\sum_{j=1}^n2\bm{\tilde{A}}_{ij}\bm{\bar{h}}_i\bm{\bar{h}}_j^T\\
    &  + \sum_{i=1}^n\sum_{j=1}^n (\bm{\bar{h}}_i\bm{\bar{h}}_j^T)^2 \\
    &=  2Tr(\bm{\bar{H}}^T \bm{L}\bm{\bar{H}}) + R(\bm{\bar{H}})
\end{split}
\end{equation}
where $\bm{L}=\bm{I}-\bm{\tilde{A}}$ can be considered as the normalized graph Laplacian, $\bm{I}$ is the identity matrix and $R(\bm{\bar{H}})=||\bm{\bar{H}}\bm{\bar{H}}^T||_F^2$, which completes the proof. 
\end{proof}

\subsection{Proof for theorem~\ref{theorem_1}}
\label{appendix_1}

\textbf{Definition \ref{definition_1}} \textit{Given a sample $\bm{x_i}$, we say ($\bm{x}_i$, $\bm{x}_j$) is a false negative pair (or a true positive pair), if their optimal representations satisfy $\exp(\bm{\bar{h}}_i\bm{\bar{h}}_j^T/\tau) > 1$ for a small positive value $\tau$. Similarly, we say ($\bm{x}_i$, $\bm{x}_k$) is a true negative pair (or a false positive pair), if their optimal representations satisfy $\exp(\bm{\bar{h}}_i\bm{\bar{h}}_k^T/\tau) \approx 0$ for a small positive value $\tau$. }

\textbf{Theorem \ref{theorem_1}}
\textit{
Given the contrastive learning loss function $\mathcal{L}_3$, if there exists one false positive sample in the batch during training, the contrastive learning loss will lead to a sub-optimal solution.}

\begin{proof}
We can rewrite $\mathcal{L}_3$ as follows:
\begin{align}
    \nonumber \mathcal{L}_3 
        \nonumber &= \sum_{i}\sum_{j\in C(i), j\neq i}[\log (\frac{\exp(\bm{\bar{h}}_i\bm{\bar{h}}_j^T/\tau) + \sum_{k \notin C(i)} \exp(\bm{\bar{h}}_i\bm{\bar{h}}_k^T/\tau)}{\exp(\bm{\bar{h}}_i\bm{\bar{h}}_j^T/\tau)})] \\
        \nonumber &= \sum_{i}\sum_{j\in C(i), j\neq i}[\log (\exp(\bm{\bar{h}}_i\bm{\bar{h}}_j^T/\tau) + \sum_{k\notin C(i)} \exp(\bm{\bar{h}}_i\bm{\bar{h}}_k^T/\tau))- \bm{\bar{h}}_i\bm{\bar{h}}_j^T/\tau] \\
        \nonumber &= \sum_{j\in C(1), j\neq 1}[\log (\exp(\bm{\bar{h}}_1\bm{\bar{h}}_j^T/\tau) + \sum_{k\notin C(1)} \exp(\bm{\bar{h}}_1\bm{\bar{h}}_k^T/\tau))- \bm{\bar{h}}_1\bm{\bar{h}}_j^T/\tau] \\
        \nonumber &+ \sum_{i\neq 1}\sum_{j\in C(i), j\neq i}[\log (\exp(\bm{\bar{h}}_i\bm{\bar{h}}_j^T/\tau) + \sum_{k\notin C(i)} \exp(\bm{\bar{h}}_i\bm{\bar{h}}_k^T/\tau))- \bm{\bar{h}}_i\bm{\bar{h}}_j^T/\tau] \\
\end{align}
Here, we select $\bm{x}_1$ as the anchor node such that ($\bm{x}_i$,$\bm{x}_1$) is a true positive pair (\ie, $\bm{x}_i$ and $\bm{x}_1$ are from the same cluster). Taking the derivative of $\mathcal{L}_3$ with respect to $\bm{\bar{h}}_1$, we have

\begin{align}
    \nonumber \frac{\partial \mathcal{L}_3}{\partial \bm{\bar{h}}_1} &= \frac{1}{\tau} \sum_{j\in C(1), j\neq 1} [\frac{\exp(\bm{\bar{h}}_1\bm{\bar{h}}_j^T/\tau)\bm{\bar{h}}_j + \sum_{k\notin C(1)} \exp(\bm{\bar{h}}_1\bm{\bar{h}}_k^T/\tau) \bm{\bar{h}}_k}{\exp(\bm{\bar{h}}_1\bm{\bar{h}}_j^T/\tau) + \sum_{k\notin C(1)} \exp(\bm{\bar{h}}_1\bm{\bar{h}}_k^T/\tau)}  - \bm{\bar{h}}_j] \\
    &\nonumber + \frac{1}{\tau} \sum_{i \neq 1, i\in C(1)}  [\frac{\exp(\bm{\bar{h}}_i\bm{\bar{h}}_1^T/\tau)\bm{\bar{h}}_i}{\exp(\bm{\bar{h}}_i\bm{\bar{h}}_1^T/\tau) + \sum_{k\notin C(i)} \exp(\bm{\bar{h}}_i\bm{\bar{h}}_k^T/\tau)} - \bm{\bar{h}}_i] \\
    & \nonumber = \frac{1}{\tau} \sum_{j\in C(1), j\neq 1} [\frac{\sum_{k\notin C(1)} (\exp(\bm{\bar{h}}_1\bm{\bar{h}}_k^T/\tau) \bm{\bar{h}}_k -\exp(\bm{\bar{h}}_1\bm{\bar{h}}_k^T/\tau)\bm{\bar{h}}_j)}{\exp(\bm{\bar{h}}_1\bm{\bar{h}}_j^T/\tau) + \sum_{k\notin C(1)} \exp(\bm{\bar{h}}_1\bm{\bar{h}}_k^T/\tau)}] \\
    &+ \nonumber \frac{1}{\tau} \sum_{i \neq 1, i\in C(1)} [\frac{-\sum_{k \notin C(i)} \exp(\bm{\bar{h}}_i\bm{\bar{h}}_k^T/\tau)\bm{\bar{h}}_i}{\exp(\bm{\bar{h}}_i\bm{\bar{h}}_1^T/\tau) + \sum_{k\notin C(i)} \exp(\bm{\bar{h}}_i\bm{\bar{h}}_k^T/\tau)}]
\end{align}
Setting the gradient to 0, we have 
\begin{align}
     &\nonumber \sum_{j\in C(1), j\neq 1} [\frac{\sum_{k \notin C(1)} [\exp(\bm{\bar{h}}_1\bm{\bar{h}}_k^T/\tau)\bm{\bar{h}}_j - \exp(\bm{\bar{h}}_1\bm{\bar{h}}_k^T/\tau) \bm{\bar{h}}_k]}{\exp(\bm{\bar{h}}_1\bm{\bar{h}}_j^T/\tau) + \sum_{k\notin C(1)} \exp(\bm{\bar{h}}_1\bm{\bar{h}}_k^T/\tau)}] \\
     &+ \sum_{i \neq 1, i\in C(1)} [\frac{\sum_{k \notin C(i)} \exp(\bm{\bar{h}}_i\bm{\bar{h}}_k^T/\tau)\bm{\bar{h}}_i}{\exp(\bm{\bar{h}}_i\bm{\bar{h}}_1^T/\tau) + \sum_{k\notin C(i)} \exp(\bm{\bar{h}}_i\bm{\bar{h}}_k^T/\tau)}] =0 \label{false_negative_1}
\end{align}
As ($\bm{x}_i$,$\bm{x}_1$) is a true positive pair, both ($\bm{x}_i$, $\bm{x}_k$) and ($\bm{x}_1$, $\bm{x}_k$) are true negative pairs. According to Definition~\ref{definition_1}, $\exp(\bm{\bar{h}}_i\bm{\bar{h}}_k^T/\tau) \approx 0$ and $\exp(\bm{\bar{h}}_1\bm{\bar{h}}_k^T/\tau) \approx 0$ for some positive small values $\tau$ and both two terms of Eq.~\ref{false_negative_1} is 0. Thus, Eq.~\ref{false_negative_1} holds. 

Next, we want to show that if there exists one false positive sample, we reach a contradiction. Assuming that $\bm{x}_1$ is one false positive sample of $\bm{x}_i$ in the batch, then both ($\bm{x}_1$, $\bm{x}_j$) and ($\bm{x}_i$, $\bm{x}_1$) are false positive pairs and ($\bm{x}_1$, $\bm{x}_k$) is a false negative pair for some $k$ (\eg, $\bm{x}_1$ and $\bm{x}_k$ are from the same cluster for some $k$). Similarly, as ($\bm{x}_i$, $\bm{x}_k$) is a true negative pair for all $k$, $\exp(\bm{\bar{h}}_i\bm{\bar{h}}_k^T/\tau) \approx 0$ and the second term of Eq.~\ref{false_negative_1} is approximately 0. 
Therefore, we have
\begin{align}
    & \nonumber \sum_{j\in C(1), j\neq 1} \frac{\sum_{k \notin C(1)} [\exp(\bm{\bar{h}}_1\bm{\bar{h}}_k^T/\tau)\bm{\bar{h}}_j - \exp(\bm{\bar{h}}_1\bm{\bar{h}}_k^T/\tau) \bm{\bar{h}}_k]}{\exp(\bm{\bar{h}}_1\bm{\bar{h}}_j^T/\tau) + \sum_{k\notin C(1)} \exp(\bm{\bar{h}}_1\bm{\bar{h}}_k^T/\tau)} + 0 =0 \\
    & \sum_{j\in C(1), j\neq 1}\sum_{k \notin C(1)} [\exp(\bm{\bar{h}}_1\bm{\bar{h}}_k^T/\tau)\bm{\bar{h}}_j - \exp(\bm{\bar{h}}_1\bm{\bar{h}}_k^T/\tau) \bm{\bar{h}}_k] = 0
    \label{false_negative_2}
\end{align}
We multiply Eq.~\ref{false_negative_2} by $\bm{\bar{h}}_i^T$, where ($\bm{x}_i$, $\bm{x}_j$) is a true positive pair for any $j$ (\ie, both $\bm{x}_i$ and $\bm{x}_j$ are from the same cluster) and ($\bm{x}_i$, $\bm{x}_k$) is a true negative pair for any $k$. Then, we have
\begin{align}
    &\nonumber \sum_{j\in C(1), j\neq 1}\sum_{k \notin C(1)} [\exp(\bm{\bar{h}}_1\bm{\bar{h}}_k^T/\tau)\bm{\bar{h}}_j \bm{\bar{h}}_i^T \\
    & + \exp(\bm{\bar{h}}_1\bm{\bar{h}}_k^T/\tau) (-\bm{\bar{h}}_k \bm{\bar{h}}_i^T)] =0
    \label{false_negative_3}
\end{align}
Since ($\bm{x}_i$, $\bm{x}_j$) is a true positive pair and ($\bm{x}_i$, $\bm{x}_k$) is a true negative pair, we have $\exp(\bm{\bar{h}}_j\bm{\bar{h}}_i^T/\tau) >1$ and $\exp(\bm{\bar{h}}_k\bm{\bar{h}}_i^T/\tau) \approx 0$, which means that $\bm{\bar{h}}_j\bm{\bar{h}}_i^T>0$ and $\bm{\bar{h}}_k\bm{\bar{h}}_i^T <0$. Therefore, both two terms of Eq.~\ref{false_negative_3} are non-negative and Eq.~\ref{false_negative_3} holds if and only if $\exp(\bm{\bar{h}}_1\bm{\bar{h}}_k^T/\tau)\approx 0$ for any $k\notin C(1)$ (\ie, ($\bm{x}_1$, $\bm{x}_k$) is a true negative pair for any $k\notin C(1)$). 

If ($\bm{x}_1$, $\bm{x}_k$) is a true negative pair for any $k\notin C(1)$, then ($\bm{x}_1$, $\bm{x}_i$) has to be a true positive pair, and it contradicts our assumption that $\bm{x}_1$ is a false positive sample of $\bm{x}_i$. Therefore, we reach a contradiction and we could not get the optimal solution for $\bm{\bar{h}}_1$, which completes the proof. 
\end{proof}

\section{Experiments}
\label{more_results}
In this section, we show the details of generating anomalous node for the semi-supervised datasets, including IMDB and DBLP. We also show the hyper-parameter specification for reproducing the experimental results.

\subsection{Anomalous node generation}
\label{generate_anomaly}
We follow the published works \cite{DBLP:conf/sdm/DingLBL19} to generate anomalous nodes by perturbing the topological structure or node attributes of an attributed network. To perturb the topological structure of an attributed network, we adopt the method introduced by~\cite{DBLP:conf/sdm/DingLBL19} to generate some small cliques as in many real-world scenarios. A small clique is a typical anomalous substructure due to larger node degrees than normal nodes. After we specify the clique size as $m$, we randomly select $m$ nodes from the network and then make those nodes fully connected. Then all the $m$ nodes in the clique are regarded as anomalies. In addition to the injection of structural anomalies, we adopt another attribute perturbation schema introduced by~\cite{DBLP:conf/sdm/DingLBL19} to generate anomalies from an attribute perspective. For each selected node $u_i$, we randomly pick another $k$ nodes and select node $u_j$ whose attributes deviate the most from node $u_j$ among the $k$ nodes by maximizing the Euclidean distance $||x_i-x_j||^2$. Afterward, we then change the attributes $x_i$ of node $u_i$  to $x_j$. 

\subsection{Reproducibility} 
\label{reproducibility}
All of the real-world data sets are publicly available.  The experiments are performed on a Windows machine with a 24GB RTX 4090 GPU. We use TAM as the backbone of our method to capture the local node affinity. We set the number of clusters to be 10, $\lambda=0.1$ and $\alpha=0.8$ for the CERT dataset. For the IMDB dataset, we set the number of clusters to be 10, the value of $\lambda=0.01$ and $\alpha=1$. For the DBLP dataset, we set the number of clusters to be 10, the value of $\lambda=0.01$ and $\alpha=1$. For the BlogCatalog dataset, we set the number of clusters to be 5, $\lambda=0.01$ and $\alpha=0.01$. For the Amazon dataset, we set the number of clusters to be 10, $\lambda=1$ and $\alpha=0.8$. For the Yelp dataset, we set the number of clusters to be 10, $\lambda=1$ and $\alpha=0.8$.

\end{document}